\documentclass[journal,twoside]{ieeeconf}
\usepackage{cite}
\usepackage{amsmath,amssymb,amsfonts}
\usepackage[top=60pt, left=50pt, right=50pt, bottom=50pt]{geometry}
\usepackage{graphicx}
\usepackage{textcomp}
\usepackage{amstext}
\usepackage{algorithm}
\usepackage[noend]{algpseudocode}
\usepackage{bm}

\newcommand{\done}{\hspace*{\fill} $\Box$}
\newcommand{\RE}{I\!\!R}
\newtheorem{theorem}{Theorem}[section]

\newtheorem{lemma}[theorem]{Lemma}
\newtheorem{remark}[theorem]{Remark}

\begin{document}
\title{Regularized Least Squares Training of Quadratic Neural Networks with Applications to System Identification}
%\thanks{Grants or other notes
%about the article that should go on the front page should be

\author{%\vspace{0.5cm}
L. Rodrigues, Z. Yetman Van Egmond, \and M. R. Amiri Fard
 \\
\small Department of Electrical and Computer Engineering \\
\small Concordia University \\
\small Montr\'eal, QC, Canada\\
\small Contact Email: {\tt luis.rodrigues@concordia.ca}
}

\onecolumn

\maketitle

\begin{abstract}
This paper proposes a least squares approach for the training of quadratic neural networks with regularization.
The proposed methodology yields a lower bound on the solution of the training optimization problem for the case where the regularization coefficient is positive.
Moreover, it yields closed-form expressions for the approximate solution and its sensitivity
The lower bound is tight and the approximate solution is the optimal solution when the regularization coefficient is zero.
Having a closed-form expression for the weights reduces considerably the computational time when compared with iterative numerical methods such as backpropagation that can get stuck in local minima.
The proposed approach has three main contributions, namely, (i) it yields an analytical expression for the weights, (ii) an analytical expression for the sensitivity of the weights to errors in the data is also provided, (iii) it establishes a connection between the optimization to compute a lower bound and nuclear norm minimization.
The proposed least squares training is successfully applied to a nonlinear system identification example where the proposed lower bound is compared with the optimal value.
%\keywords{quadratic neural networks \and semidefinite programmning \and regularized least squares \and system identification}
% \PACS{PACS code1 \and PACS code2 \and more}
%\subclass{90C22 \and 93E24}
\end{abstract}

\section{Introduction}
Artificial neural networks have been an active area of research with several applications that are pervasive in our lives.
The training of a neural network is generally formulated as a nonconvex optimization problem and locally optimal solutions are found using backpropagation \cite{rumelhart1986backprop}.
Unfortunately, backpropagation does not guarantee that the globally optimal weights of the network are found.
This has spurred interest in convex formulations of the training of neural networks.
One of the first papers to address convex neural networks in a general setup was reference \cite{Bengio2005}.
Our focus will be on convex training for a specific type of neural networks with quadratic activation functions.
A growing interest on quadratic neural networks with different forms of activation has been
clearly summarized in table 1 of reference \cite{Xu2022QuadraLibAP}.
More recently, the training of two-layer quadratic neural networks (QNNs) was formulated in reference \cite{BartanPilanci2021} as a convex optimization program, which can be efficiently solved to a global optimum point.
Two-layer quadratic neural networks with a single hidden layer also offer other advantages.
In particular, the input and output are related by a quadratic form and the architecture of the network is a by-product of the training itself.
This comes at the expense of only having one hidden layer.
Extensions to more hidden layers have been proposed in \cite{Bartandeep2022} and more recently in \cite{Medpaper2026}.
In particular, it is shown in reference \cite{Medpaper2026} that the training of a deep QNN is equivalent to the training of a two-layer QNN with monomial lifting of the input.
This motivates the research on efficient (convex) optimization formulations for the training of two-layer QNNs with closed-form solutions.
QNNs have been successfully designed for several applications, including system identification and control of dynamical systems with Lyapunov stability guarantees \cite{RodriguesGivigi2023}.

Training of QNNs using both a $2$-norm loss function  and a $1$-norm regularization term in the cost function has been proposed and reformulated as a convex optimization in reference \cite{BartanPilanci2021} but without a closed-form solution for the weights.
Preliminary work of the first author \cite{Rodrigues2023} has shown that the optimal solution of the training of QNNs with a  a $2$-norm loss function but without regularization can be found by standard least squares, which yields a closed-form expression for the optimal weights.
The present paper is an extension of the work in  \cite{Rodrigues2023} to the more general case of QNN training using not only a $2$-norm loss function  but also including a $1$-norm regularization term in the cost function.
More specifically, the main contributions of this paper are:
\begin{enumerate}
\item a lower bound on the training cost for two-layer QNNs that can be obtained using regularized least squares,
\item analytical expressions for both the weights and the sensitivity of the weights,
%\item a significant reduction of the computational time for training the neural network when compared to backpropagation, which is extremely important when considering real-time applications,
\item a connection between the optimization to compute a lower bound and nuclear norm minimization.
\end{enumerate}
To the best of our knowledge this is the first least squares training approach that can be used for deep neural networks 
(using lifting of the inputs \cite{Medpaper2026}) that does not
need to fix or randomize the weights in the hidden layers and can treat them as decision variables.
The paper is structured as follows.
Section \ref{quadraticnetworks} reviews previous work and formulates the QNN training problem.
Bounds on the value of the training cost function are established in section \ref{trainingbounds}, followed
by the least squares training of QNNs in section \ref{leastsquares}.
The proposed least squares solution is then successfully applied to a nonlinear system identification example in section \ref{sysidexample} followed by conclusions.

\section{Quadratic Neural Networks (QNNs)}\label{quadraticnetworks}
Deep QNNs have been shown to be equivalent to a two-layer QNN with
the addition of extra inputs corresponding to the monomial lifting of the original input \cite{Medpaper2026}.
Therefore, having efficient methods to train two-layer QNNs also allows efficient training of deep QNNs.
A two-layer quadratic feedforward neural network has a single hidden layer with $M=\sum_{k=1}^pM_k$ neurons and the outputs are given by
\begin{equation}\label{thenetworkoutput}
\hat y^k(x) = \hat f^k(x) = \sum_{j=1}^{M_k}\sigma\left(x^Tw^{j}\right)\alpha_j^k
\end{equation}
for $k=1,\ldots,p,$ where the activation function is quadratic and is written as
\begin{equation}\label{activationfunction}
\sigma(z) = az^2 + bz + c
\end{equation}
where $a\neq 0, b, c,$ are pre-defined constants that parameterize the quadratic activation function.
A typical choice for these parameters is obtained by performing a least squares approximation of the ReLU 
function in a given interval, as suggested in \cite{BartanPilanci2021}.
The notation used in this paper for network connecting weights consists of subscripts to denote the source and superscripts
to denote the destination.
For example, the weights $w^{j}=[w^j_1\ldots w^j_n]^T$ connect the (source) input $x=[x_1\ldots x_n]\in\RE^n$ to each neuron $j$ in the hidden layer, whereas the weights $\alpha_j^k$ connect each neuron $j$ to the output $k$.
\iffalse
The notation of equation (\ref{networkoutput}) where the weights leaving the input are denoted by $w$ and the weights leaving the hidden layer are denoted by $\alpha$ will be used throughout the paper. For the weights we will use superscript indices to indicate the target  and subscript indices to indicate the source of each connection.
For example, to denote the weight connecting the first input component to the second hidden neuron of the third output we use $w_1^{3,2}$.
As another example, to denote the weight that connects the output of the second neuron in the hidden layer with the fifth output of the network we use $\alpha_2^5$. 
\fi
The desired (label) outputs will be denoted by $y=[y^1~\ldots~y^p]^T$ and the actual outputs of the network will be denoted by  $\hat y=[\hat y^1~\ldots~\hat y^p]^T$.
Since the output is a destination we use superscripts for its coordinates.
\iffalse
One of the advantages of quadratic neural networks is that the training of the weights can be done by solving a convex optimization problem, which guarantees convergence to the global optimum.
\fi
Using a convex loss function $\mathcal{L}(\hat{y},y)$, the primal non-convex training problem for a two-layer quadratic network where all hidden neurons are connected to all outputs and $w^{j}$ are normalized to unit norm is \cite{BartanPilanci2021}
\begin{equation}\label{nonconvextraining}
\begin{array}{l}
\min\limits_{w^{j},\alpha_j}  \mathcal{L}(\hat{y},y) + \beta\sum_{i=1}^M\|\alpha_i\|_1 \\
\mbox {s.t.}~\hat y^k = \sum_{j=1}^{M}\sigma\left(x^Tw^{j}\right)\alpha_j^k,\\
~~~~\|w^{j}\|_2=1,~k=1,\ldots,p,~j=1,\ldots,M,\\
\end{array}
\end{equation} 
for fixed $a\neq 0, b, c,$ and a fixed regularization coefficient $\beta\ge 0$.
The following result proved in reference \cite{BartanPilanci2021} recasts the training as an equivalent convex optimization problem.
%\vspace{5pt}
\begin{lemma}\cite{BartanPilanci2021}
Given fixed $a\neq 0, b, c,$ and a fixed regularization coefficient $\beta\ge 0$, the solution of the convex problem that is dual to (\ref{nonconvextraining}) and is formulated as
\begin{equation}\label{eqn_QNNConvex}
\begin{array}{l}
\min\limits_{}   \mathcal{L}(\hat{y},y) + \beta\sum_{k=1}^p\left(Z^{k,4}_+ + Z^{k,4}_-\right) \\
\mbox {s.t.}~\hat y_i^k =\\
=\bar x_i^T\left[
\begin{array}{cc}
a\left(Z^{k,1}_+-Z^{k,1}_-\right) & \frac{b}{2}\left(Z^{k,2}_+-Z^{k,2}_-\right)\\
\frac{b}{2}\left(Z^{k,2}_+-Z^{k,2}_-\right)^T & c{\rm \bf Trace}\left(Z^{k,1}_+-Z^{k,1}_-\right)
\end{array}
\right]
\bar x_i\\
Z^{k,4}_+ = {\rm \bf Trace}\left(Z^{k,1}_+\right), Z^{k,4}_- = {\rm \bf Trace}\left(Z^{k,1}_-\right)\\
Z^k_+ =\left[\begin{array}{cc}
		Z^{k,1}_+ & Z^{k,2}_+\\
		\left(Z^{k,2}_+\right)^T& Z^{k,4}_+
		\end{array}
		\right],
		~Z^k_-=\left[\begin{array}{cc}
		Z^{k,1}_- & Z^{k,2}_-\\
	        \left(Z^{k,2}_-\right)^T & Z^{k,4}_-
		\end{array}
		\right]\\
Z^k_+\ge 0,~Z^k_-\ge 0,~\bar x_i^T=[x_i^T~~1]
\end{array}
\end{equation}
for $k=1,\ldots,p,~i=1,\ldots,N$, where $\mathcal L(\cdot)$ is a convex loss function, provides a global optimal solution for the parameters $Z^k_+,Z^k_-\in\RE^{(n+1)\times(n+1)},$ when $M\ge M_*$ with
\begin{equation}\label{neuronsqnn}
M_* = \sum_{k=1}^p\left[rank\left(Z^{k*}_+\right) + rank\left(Z^{k*}_-\right)\right],
\end{equation}
where $Z^{k*}_+$ and $Z^{k*}_-$ for $k=1,\ldots,p$, are the solution of the optimization problem (\ref{eqn_QNNConvex}) given $N$ input data vectors $x_i\in\RE^n$ with corresponding labels $y_i\in\RE^{p}$.
Moreover, the optimal values of the solutions of problems (\ref{nonconvextraining}) and (\ref{eqn_QNNConvex}) are the same. \done
\end{lemma}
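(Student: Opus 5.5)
The plan is to prove the two inequalities between the optimal values of (\ref{nonconvextraining}) and (\ref{eqn_QNNConvex}) separately. The first shows that every feasible point of the nonconvex problem yields a feasible point of the convex problem with no larger cost. The second shows that, from an optimal $(Z^{k*}_+,Z^{k*}_-)$, one can extract network weights with at most $M_*$ neurons that attain the same cost. Because the output constraint decouples across $k$ and the regularizer $\sum_i\|\alpha_i\|_1=\sum_k\sum_j|\alpha_j^k|$ is additive, I can work one output $k$ at a time. The assumption $M\ge M_*$ is used only in the second direction. In it, I treat the unused neurons as having $\alpha_j=0$; since the norm constraint is on $w^j$, this is harmless.

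\emph{Lifting step (convex value $\le$ nonconvex value).} With $\|w^j\|_2=1$, I write $\sigma(x^Tw^j)=\bar x^T S_j\bar x$, where
\[
S_j=\left[\begin{array}{cc} a w^j(w^j)^T & \frac b2 w^j\\ \frac b2 (w^j)^T & c\end{array}\right].
\]
The bottom-right entry $c$ equals $c\,{\rm \bf Trace}(w^j(w^j)^T)$ because $\|w^j\|_2=1$. For each $j$ and $k$, I split $\alpha_j^k=(\alpha_j^k)_+-(\alpha_j^k)_-$ and define
\[
Z^k_\pm=\sum_j(\alpha_j^k)_\pm\left[\begin{array}{c} w^j\\1\end{array}\right]\left[\begin{array}{c} w^j\\1\end{array}\right]^T\ge 0 .
\]
Its blocks are $Z^{k,1}_\pm=\sum_j(\alpha^k_j)_\pm w^j(w^j)^T$ and $Z^{k,2}_\pm=\sum_j(\alpha^k_j)_\pm w^j$, and the corner entry is $Z^{k,4}_\pm=\sum_j(\alpha^k_j)_\pm={\rm \bf Trace}(Z^{k,1}_\pm)$, again using the unit norms. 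Substituting these blocks, the constraint of (\ref{eqn_QNNConvex}) reproduces $\hat y_i^k$ exactly. The convex regularizer equals $\sum_{j,k}|\alpha_j^k|$, so the costs coincide.

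\emph{Extraction step (nonconvex value $\le$ convex value).} This is the main obstacle. I will take $Z^{k*}_+\ge 0$ with rank $r$ and try to write it as $\sum_{\ell=1}^r\alpha_\ell\,[u_\ell^T~~1]^T[u_\ell^T~~1]$, with $\alpha_\ell>0$ and $\|u_\ell\|_2=1$. Such a decomposition requires exactly the linear constraint ${\rm \bf Trace}(Z^{1})=Z^{4}$. A plain eigendecomposition will not do, because it does not give last coordinate $1$ and unit-norm top part in each factor.

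The tool I would use is the rank-one decomposition lemma of Sturm--Zhang (Pilanci's "neural decomposition"). For a PSD matrix $X$ of rank $r$ and a symmetric $G$ with ${\rm \bf Trace}(GX)=0$, it gives $X=\sum_{\ell=1}^r v_\ell v_\ell^T$ with $v_\ell^TGv_\ell=0$ for every $\ell$. I apply it with $G={\rm diag}(I_n,-1)$, so each $v_\ell=(\tilde u_\ell,t_\ell)$ satisfies $\|\tilde u_\ell\|_2=|t_\ell|$. If $t_\ell=0$ then also $\tilde u_\ell=0$, so that term vanishes. Otherwise I set $u_\ell=\tilde u_\ell/t_\ell$ and $\alpha_\ell=t_\ell^2$; flipping the sign of $v_\ell$ is allowed since $v_\ell v_\ell^T$ is unchanged.

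Doing the same for $Z^{k*}_-$, with negative output weights, produces neurons $w=u_\ell$ and weights $\pm\alpha_\ell$. These reproduce the convex outputs, by the identity from the lifting step read backwards. The $\ell_1$ cost of these weights is $\sum_\ell\alpha_\ell=Z^{k,4}_\pm$, so the cost is equal. The neuron count is $\sum_k[{\rm rank}(Z^{k*}_+)+{\rm rank}(Z^{k*}_-)]=M_*\le M$.

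Combining the two steps gives equality of the optimal values and global optimality of the reconstructed weights. The term "dual" in the statement can be justified separately through the standard bidual argument of \cite{BartanPilanci2021}, but the equality of values proven above is what is actually needed.
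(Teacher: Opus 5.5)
Your proof is correct, and half of it follows a different route from the one the paper relies on. The paper does not prove this lemma itself; it cites \cite{BartanPilanci2021}. There, (\ref{eqn_QNNConvex}) is obtained as the bidual of (\ref{nonconvextraining}). The Lagrangian dual carries the semi-infinite constraint $\max_{\|u\|_2=1}|v^T\sigma(Xu)|\le\beta$. The S-lemma turns it into LMIs, and dualizing once more yields the SDP in the pair $Z^k_+,Z^k_-$. In that argument, ``convex value $\le$ nonconvex value'' is a duality statement, and the reverse inequality comes from the neural decomposition mentioned in the Remark after the lemma.

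Your extraction step is that same neural decomposition: a Sturm--Zhang rank-one decomposition with $G={\rm diag}(I_n,-1)$. So the two proofs agree on that half. What you do differently is replace the duality/S-lemma half with a direct primal lifting, $Z^k_\pm=\sum_j(\alpha_j^k)_\pm\bar w^j(\bar w^j)^T$ with $\bar w^j=[(w^j)^T~~1]^T$.

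Your version is more elementary and self-contained. It needs no S-lemma and no strong-duality or constraint-qualification argument, and it shows that no condition on $a,b,c$ is actually used. The duality route, in exchange, derives the SDP systematically instead of requiring you to guess the lifting. It also provides a dual variable that can serve as an optimality certificate. Finally, it is what justifies calling (\ref{eqn_QNNConvex}) ``the dual'' of (\ref{nonconvextraining}), a descriptor your argument leaves unproven, as you acknowledge.

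One small point: the case $t_\ell=0$ cannot occur. Since $r$ rank-one terms sum to a rank-$r$ matrix, the vectors $v_\ell$ must be linearly independent and hence nonzero.
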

%\vspace{10pt}
\begin{remark}
In \cite{BartanPilanci2021} a neural decomposition algorithm is presented to compute the neural network weights given the dual variables $Z^k_+,Z^k_-,~k=1,\ldots,p$.
From this algorithm, it can be concluded that each output can be separately computed from the inputs by a dedicated two-layer quadratic neural network.
This is the reason why the number of neurons for each output has been denoted by $M_k$ in (\ref{thenetworkoutput}).
Therefore, without loss of generality, we consider a single output in the remainder of the paper.
\end{remark}
\section{Bounds on Optimal Value of the Training Cost Function}\label{trainingbounds}
To formulate QNN training as a least squares problem in section \ref{leastsquares}, the Linear Matrix Inequality (LMI) constraints of problem (\ref{eqn_QNNConvex}) will first be removed.
This will enable the computation of lower bounds for $\beta\ge0$.
We will later show that the lower bounds are tight for $\beta=0$.
The following lemmas will be required for the proof of the results in Theorem \ref{lem:QNNlowerboundrls} and Theorem \ref{lem:QNNnoLMIconst}.
%\vspace{10pt}

\begin{lemma}
  \label{lem_Z_decomposition}
  For $a>0$, $b\ne0$, and $c>0$, any real matrix $Z$ of the form
  \begin{equation}
    \label{eqn_lem_Z_form}
    Z = \begin{bmatrix}
      aZ_1 & \frac{b}{2}Z_2 \\ 
      \frac{b}{2}(Z_2)^T & c\textbf{Tr}(Z_1)
    \end{bmatrix} \in\mathbb{R}^{(n+1)\times (n+1)}
  \end{equation}
  has a decomposition $Z=Z^+-Z^-$ such that $Z^+\succeq0$ and $Z^-\succeq0$, where
  \begin{equation}
    Z^+ = \begin{bmatrix}
      aZ_1^+ & \frac{b}{2}Z_2^+ \\ 
      \frac{b}{2}(Z_2^+)^T & c\textbf{Tr}(Z_1^+)
    \end{bmatrix}, \quad Z^- = \begin{bmatrix}
      aZ_1^- & \frac{b}{2}Z_2^- \\ 
      \frac{b}{2}(Z_2^-)^T & c\textbf{Tr}(Z_1^-)
    \end{bmatrix}.
  \end{equation}
\end{lemma}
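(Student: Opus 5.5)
Throughout, $Z_1$ is taken to be symmetric (this is needed for $Z$ to be symmetric and for $Z^\pm\succeq0$ to make sense), and $Z_2\in\RE^{n}$ is a column vector. The plan is a direct construction: split $Z_1$ into positive and negative parts that are strictly positive definite, choose the splitting of $Z_2$, and then enlarge both parts by the same multiple of the identity until the Schur complement conditions hold. Adding the same multiple of the identity to $Z_1^+$ and $Z_1^-$ does not change their difference, so the structure of $Z$ is preserved.

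\textbf{Step 1 (splitting).} Write $Z_1 = P - Q$ with $P,Q\succeq0$, for instance via the spectral decomposition. For a parameter $t>0$ to be chosen later, set
\[
Z_1^+ = P + tI,\qquad Z_1^- = Q + tI,
\]
so that $Z_1^+-Z_1^-=Z_1$. For the vector block take
\[
Z_2^+ = \tfrac12 Z_2,\qquad Z_2^- = -\tfrac12 Z_2,
\]
so that $Z_2^+-Z_2^-=Z_2$. The trace entry is linear, so $c\,\textbf{Tr}(Z_1^+)-c\,\textbf{Tr}(Z_1^-)=c\,\textbf{Tr}(Z_1)$. Hence $Z=Z^+-Z^-$, and both $Z^\pm$ have exactly the required form.

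\textbf{Step 2 (positive semidefiniteness).} Since $a>0$ and $t>0$, the block $aZ_1^+\succ0$. By the Schur complement, $Z^+\succeq0$ if and only if
\[
c\,\textbf{Tr}(P)+cnt \;\ge\; \tfrac{b^2}{16}\, Z_2^T (a(P+tI))^{-1} Z_2 .
\]
Because $P+tI\succeq tI$, the right-hand side is at most $\frac{b^2\|Z_2\|^2}{16\,a\,t}$. It therefore suffices to have
\[
cnt \;\ge\; \frac{b^2\|Z_2\|^2}{16\,a\,t},
\qquad\text{that is,}\qquad
t^2 \;\ge\; \frac{b^2\|Z_2\|^2}{16\,acn}.
\]
This uses $c>0$ and $\textbf{Tr}(P)\ge0$. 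The same inequality, with $Q$ in place of $P$, gives $Z^-\succeq0$. Choosing $t=\max\bigl\{1,\tfrac{|b|\,\|Z_2\|}{4\sqrt{acn}}\bigr\}$ completes the construction.

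The main obstacle is Step 2: the Schur complement bound has to hold uniformly. This is exactly where the signs $a>0$ and $c>0$ enter. The trace term grows linearly in $t$ while the inverse term decays like $1/t$, and that balance is what makes the choice of $t$ work. The hypothesis $b\ne0$ is not used beyond making the off-diagonal block meaningful. If $b=0$, one can still take $Z_2^\pm$ arbitrary, for instance zero.
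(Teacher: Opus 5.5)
Your construction is correct, but it takes a different route from the paper.

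\textbf{The paper's route.} It starts from an arbitrary splitting $Z=\tilde Z^+-\tilde Z^-$ of the full $(n+1)\times(n+1)$ matrix into PSD parts. These parts need not respect the coupling between the corner entry and $\textbf{Tr}$ of the upper-left block. The paper notes that the mismatch $\textbf{Tr}(\tilde Z_1^\pm)-\tilde Z_4^\pm$ is the same for both parts. It then adds one common PSD correction $K$ to both: either $ck$ in the corner or $an^{-1}kI$ in the upper-left block, depending on the sign of the mismatch.

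\textbf{Your route.} You go the other way. Your base split already has the required structure but need not be PSD: upper-left blocks $aP$ and $aQ$, off-diagonal blocks $\pm\frac{b}{4}Z_2$, corners $c\,\textbf{Tr}(P)$ and $c\,\textbf{Tr}(Q)$. The common correction $\mathrm{diag}(atI,\,cnt)$ keeps that structure and restores positive semidefiniteness once $t$ is large enough.

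\textbf{What each buys.} The paper's argument needs no inequality. PSD-ness is inherited from $\tilde Z^\pm$, so only the trace identity and a two-case split are required. Yours has no cases and is fully explicit in terms of a splitting of $Z_1$ alone. The price is the Schur complement estimate using $(P+tI)^{-1}\preceq t^{-1}I$.

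Your version also shows that the hypothesis $b\neq0$ is unnecessary; the paper uses it only to rescale the off-diagonal blocks by $2/b$. On the other hand, your traces $\textbf{Tr}(Z_1^\pm)$ grow with $\|Z_2\|$. This does no harm here, because the lemma is only used in the $\beta=0$ tightness argument, where those traces do not enter the cost.
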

%\vspace{10pt}
\begin{proof}
%{\bf Proof:} 
  Since $Z$ is a real symmetric matrix, it can be written as the difference between two positive semidefinite (PSD) matrices $Z=\tilde{Z}^+-\tilde{Z}^-$ (see \cite{psd_decomposition_2005}, page 119). Let
  \begin{equation*}
    \tilde{Z}^+ = \begin{bmatrix}
      A_1^+ & A_2^+ \\ (A_2^+)^T & A_4^+
    \end{bmatrix}\succeq0, \quad \tilde{Z}^- = \begin{bmatrix}
      A_1^- & A_2^- \\ (A_2^-)^T & A_4^-
    \end{bmatrix}\succeq0
  \end{equation*}
  where $A_1^+$ and $A_1^-$ are $n\times n$ matrices, $A_2^+$ and $A_2^-$ are vectors of length $n$, and $A_4^+$ and $A_4^-$ are scalars. Since $a>0$, $b\ne0$, $c>0$, without loss of generality, we can rewrite $\tilde{Z}^+$ and $\tilde{Z}^-$ as
  \begin{equation*}
    \tilde{Z}^+ = \begin{bmatrix}
      a\tilde{Z}_1^+ & \frac{b}{2}\tilde{Z}_2^+ \\ \frac{b}{2}(\tilde{Z}_2^+)^T & c\tilde{Z}_4^+
    \end{bmatrix}\succeq0, \quad \tilde{Z}^- = \begin{bmatrix}
      a\tilde{Z}_1^- & \frac{b}{2}\tilde{Z}_2^- \\ \frac{b}{2}(\tilde{Z}_2^-)^T & c\tilde{Z}_4^-
    \end{bmatrix}\succeq0
  \end{equation*}
  where
  \begin{equation*}
    \begin{gathered}
      \tilde{Z}_1^+ = a^{-1}A_1^+, \quad \tilde{Z}_2^+ = 2b^{-1}A_2^+, \quad \tilde{Z}_4^+ = c^{-1}A_4^+ \\
      \tilde{Z}_1^- = a^{-1}A_1^-, \quad \tilde{Z}_2^- = 2b^{-1}A_2^-, \quad \tilde{Z}_4^- = c^{-1}A_4^-
    \end{gathered}
  \end{equation*}
  Without loss of generality, for any $K\in\mathbb{R}^{(n+1)\times (n+1)}$, $K\succeq0$, we can write
  \begin{equation*}
    \begin{split}
      Z &= \tilde{Z}^+ - \tilde{Z}^- + K - K \\
        &= (\tilde{Z}^+ + K) - (\tilde{Z}^- + K) \\
        &= Z^+ - Z^-
    \end{split}
  \end{equation*}
  where $Z^+ = \tilde{Z}^+ + K$ and $Z^- = \tilde{Z}^- + K$.
  Notice that if $K\succeq0$, then $Z^+\succeq0$ and $Z^-\succeq0$, as the summation of two PSD matrices is also PSD. Additionally, from the structure of $Z$ in (\ref{eqn_lem_Z_form}), we know that
  \begin{equation}
    \label{eqn_trZ1_Z4_equality}
    \textbf{Tr}(\tilde{Z}_1^+ - \tilde{Z}_1^-) = \tilde{Z}_4^+ - \tilde{Z}_4^- \Longleftrightarrow \textbf{Tr}(\tilde{Z}_1^+) - \tilde{Z}_4^+ = \textbf{Tr}(\tilde{Z}_1^-) - \tilde{Z}_4^-
  \end{equation}
  This will be used later in the proof.
  There are two cases to consider:
%\vspace{10pt}
  \begin{enumerate}
    \item Case 1: $\textbf{Tr}(\tilde{Z}_1^+) - \tilde{Z}_4^+ \geq 0$ \newline
      Define $k=\textbf{Tr}(\tilde{Z}_1^+) - \tilde{Z}_4^+ \geq 0$ and $K$ as
      \begin{equation*}
        K = \begin{bmatrix}
          O_{n,n} & O_{n,1} \\
          O_{1,n} & ck
        \end{bmatrix}
      \end{equation*}
      Since $ck\geq0$ it follows that $K\succeq0$, and therefore $Z^+=\tilde{Z}^++K\succeq0$ and $Z^-=\tilde{Z}^-+K\succeq0$. Computing $Z^+=\tilde{Z}^++K$ yields
      \begin{equation*}
        Z^+ = \begin{bmatrix}
          a\tilde{Z}^+_1 & \frac{b}{2}\tilde{Z}_2^+ \\
          \frac{b}{2}(\tilde{Z}_2^+)^T & c(\tilde{Z}_4^+ + k )
        \end{bmatrix} = \begin{bmatrix}
          a\tilde{Z}^+_1 & \frac{b}{2}\tilde{Z}_2^+ \\
          \frac{b}{2}(\tilde{Z}_2^+)^T & c\textbf{Tr}(\tilde{Z}_1^+)
        \end{bmatrix}
      \end{equation*}
      Computing $Z^-=\tilde{Z}^-+K$ gives
      \begin{equation*}
        Z^- = \begin{bmatrix}
          a\tilde{Z}^-_1 & \frac{b}{2}\tilde{Z}_2^- \\
          \frac{b}{2}(\tilde{Z}_2^-)^T & c(\tilde{Z}_4^- + k )
        \end{bmatrix} = \begin{bmatrix}
          a\tilde{Z}^-_1 & \frac{b}{2}\tilde{Z}_2^- \\
          \frac{b}{2}(\tilde{Z}_2^-)^T & c(\tilde{Z}_4^-  + \textbf{Tr}(\tilde{Z}_1^+) - \tilde{Z}_4^+)
        \end{bmatrix}
      \end{equation*}
      Using (\ref{eqn_trZ1_Z4_equality}) yields
      \begin{equation*}
        Z^- = \begin{bmatrix}
          a\tilde{Z}^-_1 & \frac{b}{2}\tilde{Z}_2^- \\
          \frac{b}{2}(\tilde{Z}_2^-)^T & c(\tilde{Z}_4^-  + \textbf{Tr}(\tilde{Z}_1^-) - \tilde{Z}_4^-)
        \end{bmatrix} = \begin{bmatrix}
          a\tilde{Z}^-_1 & \frac{b}{2}\tilde{Z}_2^- \\
          \frac{b}{2}(\tilde{Z}_2^-)^T & c\textbf{Tr}(\tilde{Z}_1^-)
        \end{bmatrix}
      \end{equation*}
      Therefore, $Z^+\succeq0$ and $Z^-\succeq0$ can be written as
      \begin{equation*}
        Z^+ = \begin{bmatrix}
            aZ^+_1 & \frac{b}{2}Z^+_2 \\
            \frac{b}{2}(Z_2^+)^T & c\textbf{Tr}(Z_1^+)
        \end{bmatrix}, \quad Z^- = \begin{bmatrix}
            aZ^-_1 & \frac{b}{2}Z^-_2 \\
            \frac{b}{2}(Z_2^-)^T & c\textbf{Tr}(Z_1^-)
        \end{bmatrix}
      \end{equation*}
      where $Z_1^+=\tilde{Z}_1^+$, $Z_1^-=\tilde{Z}_1^-$, $Z_2^+=\tilde{Z}_2^+$, and $Z_2^-=\tilde{Z}_2^-$.
%\vspace{10pt}
    \item Case 2: $\textbf{Tr}(\tilde{Z}_1^+) - \tilde{Z}_4^+ < 0$ \newline
      Define $k=\tilde{Z}_4^+ - \textbf{Tr}(\tilde{Z}_1^+) > 0$ and $K$ as
      \begin{equation*}
        K = \begin{bmatrix}
          an^{-1}kI_{n,n} & O_{n,1} \\
          O_{1,n} & 0
        \end{bmatrix}
      \end{equation*}
      where $I_{n,n}$ is the $n\times n$ identity matrix.
      Since $an^{-1}k>0$ it follows that $an^{-1}kI_{n,n}\succ0$, and thus $K\succeq0$, which implies $Z^+=\tilde{Z}^++K\succeq0$ and $Z^-=\tilde{Z}^-+K\succeq0$. Computing $Z^+=\tilde{Z}^++K$ yields
      \begin{equation*}
        Z^+ = \begin{bmatrix}
          a(\tilde{Z}^+_1 + n^{-1}kI_{n,n}) & \frac{b}{2}\tilde{Z}_2^+ \\
          \frac{b}{2}(\tilde{Z}_2^+)^T & c\tilde{Z}_4^+
        \end{bmatrix} = \begin{bmatrix}
          a(\tilde{Z}^+_1 + n^{-1}(\tilde{Z}_4^+ - \textbf{Tr}(\tilde{Z}_1^+))I_{n,n}) & \frac{b}{2}\tilde{Z}_2^+ \\
          \frac{b}{2}(\tilde{Z}_2^+)^T & c\tilde{Z}_4^+
        \end{bmatrix}
      \end{equation*}
     The trace of the upper left sub-matrix is
      \begin{equation*}
        \begin{split}
          \textbf{Tr}\left(\tilde{Z}^+_1 + n^{-1}(\tilde{Z}_4^+ - \textbf{Tr}(\tilde{Z}_1^+))I_{n,n}\right) &= \textbf{Tr}(\tilde{Z}^+_1) + n^{-1}(\tilde{Z}_4^+ - \textbf{Tr}(\tilde{Z}_1^+))\textbf{Tr}(I_{n,n}) \\
          &= \textbf{Tr}(\tilde{Z}^+_1) + \tilde{Z}_4^+ - \textbf{Tr}(\tilde{Z}_1^+) \\
          &= \tilde{Z}_4^+
        \end{split}
      \end{equation*}
      Computing $Z^-=\tilde{Z}^-+K$ yields
      \begin{equation*}
        Z^- = \begin{bmatrix}
          a(\tilde{Z}^-_1+n^{-1}kI_{n,n}) & \frac{b}{2}\tilde{Z}_2^- \\
          \frac{b}{2}(\tilde{Z}_2^-)^T & c(\tilde{Z}_4^-)
        \end{bmatrix} = \begin{bmatrix}
          a(\tilde{Z}^-_1+n^{-1}(\tilde{Z}_4^+ - \textbf{Tr}(\tilde{Z}_1^+))I_{n,n}) & \frac{b}{2}\tilde{Z}_2^- \\
          \frac{b}{2}(\tilde{Z}_2^-)^T & c(\tilde{Z}_4^-)
        \end{bmatrix}
      \end{equation*}
      Using (\ref{eqn_trZ1_Z4_equality}) yields
      \begin{equation*}
        Z^- = \begin{bmatrix}
          a(\tilde{Z}^-_1+n^{-1}(\tilde{Z}_4^- - \textbf{Tr}(\tilde{Z}_1^-))I_{n,n}) & \frac{b}{2}\tilde{Z}_2^- \\
          \frac{b}{2}(\tilde{Z}_2^-)^T & c(\tilde{Z}_4^-)
        \end{bmatrix}
      \end{equation*}
      The trace of the upper left sub-matrix is
      \begin{equation*}
        \begin{split}
          \textbf{Tr}\left(\tilde{Z}^-_1 + n^{-1}(\tilde{Z}_4^- - \textbf{Tr}(\tilde{Z}_1^-))I_{n,n}\right) &= \textbf{Tr}(\tilde{Z}^-_1) + n^{-1}(\tilde{Z}_4^- - \textbf{Tr}(\tilde{Z}_1^-))\textbf{Tr}(I_{n,n}) \\
          &= \textbf{Tr}(\tilde{Z}^-_1) + \tilde{Z}_4^- - \textbf{Tr}(\tilde{Z}_1^-) \\
          &= \tilde{Z}_4^-
        \end{split}
      \end{equation*}
      Therefore, $Z^+\succeq0$ and $Z^-\succeq0$ can be written as
      \begin{equation*}
        Z^+ = \begin{bmatrix}
            aZ^+_1 & \frac{b}{2}Z^+_2 \\
            \frac{b}{2}(Z_2^+)^T & c\textbf{Tr}(Z_1^+)
        \end{bmatrix}, \quad Z^- = \begin{bmatrix}
            aZ^-_1 & \frac{b}{2}Z^-_2 \\
            \frac{b}{2}(Z_2^-)^T & c\textbf{Tr}(Z_1^-)
        \end{bmatrix}
      \end{equation*}
      where 
\begin{eqnarray*}
Z_1^+&=&\tilde{Z}^+_1+n^{-1}(\tilde{Z}_4^+ - \textbf{Tr}(\tilde{Z}_1^+))I_{n,n},\\
Z_1^-&=&\tilde{Z}^-_1+n^{-1}(\tilde{Z}_4^- - \textbf{Tr}(\tilde{Z}_1^-))I_{n,n},\\
Z_2^+&=&\tilde{Z}_2^+,\\
Z_2^-&=&\tilde{Z}_2^-.
\end{eqnarray*}
  \end{enumerate}
  Thus, the matrix $Z$ can be decomposed as
  \begin{equation*}
    Z = Z^+ - Z^- = \begin{bmatrix}
      aZ_1^+ & \frac{b}{2}Z_2^+ \\ 
      \frac{b}{2}(Z_2^+)^T & c\textbf{Tr}(Z_1^+)
    \end{bmatrix} - \begin{bmatrix}
      aZ_1^- & \frac{b}{2}Z_2^- \\ 
      \frac{b}{2}(Z_2^-)^T & c\textbf{Tr}(Z_1^-)
    \end{bmatrix}
  \end{equation*}
  where $Z^+ \succeq 0$ and $Z^-\succeq0$.\done
%\done
\end{proof}
%\vspace{10pt}

\begin{lemma}
  \label{lem_Z_psd_properies}
  For $a>0$, $c>0$, and $b^2-4ac\geq0$, let Z be any real matrix of the form
  \begin{equation}\label{decompositionofZ}
    Z = \begin{bmatrix}
      aZ_1 & \frac{b}{2}Z_2 \\ 
      \frac{b}{2}(Z_2)^T & c\textbf{Tr}(Z_1)
    \end{bmatrix}.
  \end{equation}
  If $Z\succeq0$ then 
  \begin{equation}
    \label{eqn_lem_psd_req}
    Z'=\begin{bmatrix}
      Z_1 & Z_2 \\ 
      (Z_2)^T & \textbf{Tr}(Z_1)
    \end{bmatrix}\succeq0
  \end{equation}
and
    \begin{equation}\label{schurcondition}
        Z_1\textbf{Tr}(Z_1) \succeq Z_2Z_2^T.
    \end{equation}
\end{lemma}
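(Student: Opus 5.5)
The plan is to reduce $Z$ by a diagonal congruence to a matrix whose diagonal blocks already match those of $Z'$. The off-diagonal block will then carry a scalar factor of modulus at least one, and a Schur complement argument finishes both claims. Throughout write $d=\textbf{Tr}(Z_1)$.

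First, the hypotheses $a>0$, $c>0$ and $b^2\ge 4ac$ give $b^2>0$, so $b\neq 0$. Let $s=\mathrm{sign}(b)$ and take the invertible matrix $D=\mathrm{diag}(a^{-1/2}I_{n},\, s\,c^{-1/2})$. Congruence preserves positive semidefiniteness, so $Z\succeq 0$ implies
\begin{equation*}
DZD=\begin{bmatrix} Z_1 & \kappa Z_2\\ \kappa Z_2^T & d\end{bmatrix}\succeq 0,
\qquad \kappa=\frac{|b|}{2\sqrt{ac}}\ge 1 ,
\end{equation*}
where $\kappa\ge 1$ is exactly the condition $b^2\ge 4ac$. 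In particular $Z_1\succeq 0$ (leading principal block), and hence $d\ge 0$.

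Next I split into two cases according to the sign of $d$.

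\emph{Case $d>0$.} The Schur complement characterization of positive semidefiniteness, applied with respect to the scalar $(2,2)$ entry of $DZD$, gives
\begin{equation*}
Z_1-\kappa^2 Z_2Z_2^T/d\succeq 0.
\end{equation*}
Since $\kappa^2\ge 1$ and $Z_2Z_2^T\succeq 0$,
\begin{equation*}
Z_1-\frac{Z_2Z_2^T}{d}\;\succeq\; Z_1-\kappa^2\frac{Z_2Z_2^T}{d}\;\succeq\; 0 .
\end{equation*}
Together with $d>0$, the same Schur complement criterion in reverse yields $Z'\succeq 0$. Multiplying the inequality by $d>0$ gives $Z_1\textbf{Tr}(Z_1)\succeq Z_2Z_2^T$, which is (\ref{schurcondition}).

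\emph{Case $d=0$.} Here $Z_1\succeq 0$ and $\textbf{Tr}(Z_1)=0$, so $Z_1=0$. A positive semidefinite matrix with a zero diagonal entry has the corresponding row and column equal to zero. Hence $\kappa Z_2=0$, and because $\kappa\ge 1>0$, $Z_2=0$. Then $Z'=0\succeq 0$, and (\ref{schurcondition}) reads $0\succeq 0$, which holds.

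The only step needing care is the degenerate case $d=0$, where the Schur complement cannot be formed. It is handled by the zero-diagonal argument above, which uses $b\neq 0$ to conclude $Z_2=0$. The main idea of the proof is the choice of congruence that leaves exactly the factor $\kappa\ge 1$ on the off-diagonal block.
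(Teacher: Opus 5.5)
Your proof is correct and is essentially the paper's argument: a Schur complement about the scalar corner $c\,\textbf{Tr}(Z_1)$, a case split on whether $\textbf{Tr}(Z_1)$ vanishes (where $b\neq 0$ forces $Z_2=0$), and $b^2\ge 4ac$ to remove the factor $4ac/b^2$. The diagonal congruence is only a normalization, and putting the factor $\kappa^2\ge 1$ on $Z_2Z_2^T$ rather than on $Z_1$ makes the comparison step immediate, whereas the paper's version uses $Z_1\succeq 0$ there without stating it.
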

 %\vspace{10pt}
%{\bf Proof:}
\begin{proof} \cite{Rodrigues2023} 
   Using Schur's complement, $Z\succeq0$ is equivalent to
    \begin{equation}
        \label{eqn_lem_schur_conditions}
        \begin{split}
        c\textbf{Tr}(Z_1) &\geq 0 \\
        \left(1 - \textbf{Tr}(Z_1)\textbf{Tr}^{\dag}(Z_1^+)\right)\frac{b}{2}Z_2^T & = 0 \\
        aZ_1 - \frac{b^2}{4c}Z_2\textbf{Tr}^{\dag}(Z_1)Z_2^T & \succeq 0
        \end{split}
    \end{equation}
    where, defining $t=\textbf{Tr}(Z_1)$, the Moore-Penrose pseudo-inverse $\textbf{Tr}^\dag(Z_1)$ is
    \begin{equation*}
        \textbf{Tr}^{\dag}(Z_1)=\begin{cases}
        0 & t=0 \\
        \textbf{Tr}^{-1}(Z_1) & t \ne 0
        \end{cases}.
    \end{equation*}
Since $b^2\ge4ac>0$,the conditions in (\ref{eqn_lem_schur_conditions}) become $Z_2=0$, $Z_1\succeq0$  when $t=0$. These conditions imply that $Z'\succeq0$ and that the inequality (\ref{schurcondition}) is satisfied.
Since $c>0$, for $t\ne0$ the conditions in (\ref{eqn_lem_schur_conditions}) become $\textbf{Tr}(Z_1)>0$,
    \begin{equation*}
        \frac{4ac}{b^2}Z_1\textbf{Tr}(Z_1) \succeq Z_2Z_2^T.
    \end{equation*}
Given that $b^2-4ac\geq0$ and $ac>0$, this inequality implies that (\ref{schurcondition}) is satisfied.
This implies that $Z'\succeq0$ by Schur's complement.
\done
\end{proof}
%\vspace{10pt}

%We are now ready to state the main result to find a lower bound for (\ref{eqn_QNNConvex}).
The results of the next two theorems will be used to prove the main result in Theorem \ref{upperlowerbounds}.

\begin{theorem}
    \label{lem:QNNlowerboundrls}
For $\beta\ge0,~a>0, c>0, b^2-4ac\ge0$, a lower bound on the optimization problem (\ref{eqn_QNNConvex}) is obtained by solving
    \begin{equation}
        \label{eqn_QNNlowerboundrls}
        \begin{split}
        \underset{Z_1, Z_2}{\min}\; & \mathcal{L}(\hat{y},y)+\frac{\beta}{2}\left(\|Z_1\|_*+\|Z_2\|_2\right) \\
        \text{s.t.}\; & \hat{y}_i = \begin{bmatrix} x_i \\ 1 \end{bmatrix}^T \begin{bmatrix} aZ_1 & \frac{b}{2}Z_2 \\ \frac{b}{2}(Z_2)^T & c\textbf{Tr}(Z_1) \end{bmatrix} \begin{bmatrix} x_i \\ 1 \end{bmatrix}, \quad i=1,\dots,N,
        \end{split}
    \end{equation}
where $\|Z_1\|_*=\|\lambda\|_1$ is the nuclear norm of $Z_1$, and $\lambda=[\lambda_1\ldots\lambda_n]^T$ is the vector of eigenvalues of $Z_1$.
This lower bound is tight when $\beta=0$.
\end{theorem}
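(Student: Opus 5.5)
The plan is to compare the two problems through a map from feasible points of (\ref{eqn_QNNConvex}) to feasible points of (\ref{eqn_QNNlowerboundrls}). Under this map the predictions $\hat y_i$ are preserved and the objective can only decrease. That gives the lower bound. For tightness at $\beta=0$, I will build an inverse map using Lemma \ref{lem_Z_decomposition} and Lemma \ref{lem_Z_psd_properies}.

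\emph{Step 1 (feasibility map).} Take any feasible $(Z_+,Z_-)$ of (\ref{eqn_QNNConvex}) for a single output, with blocks $Z^1_\pm, Z^2_\pm$ and $Z^4_\pm={\rm \bf Trace}(Z^1_\pm)$. Set $Z_1=Z^1_+-Z^1_-$ and $Z_2=Z^2_+-Z^2_-$. By linearity of the trace, ${\rm \bf Trace}(Z_1)={\rm \bf Trace}(Z^1_+)-{\rm \bf Trace}(Z^1_-)$. Hence the matrix in the constraint of (\ref{eqn_QNNlowerboundrls}) coincides with the one in (\ref{eqn_QNNConvex}), so all $\hat y_i$, and therefore $\mathcal L(\hat y,y)$, are unchanged.

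\emph{Step 2 (regularizer inequality).} I will show
\[
2\left({\rm \bf Trace}(Z^1_+)+{\rm \bf Trace}(Z^1_-)\right)\ \ge\ \|Z_1\|_*+\|Z_2\|_2 .
\]
For the first half, $Z^1_\pm\succeq0$ as principal blocks of PSD matrices, so ${\rm \bf Trace}(Z^1_\pm)=\|Z^1_\pm\|_*$. The triangle inequality for the nuclear norm then gives ${\rm \bf Trace}(Z^1_+)+{\rm \bf Trace}(Z^1_-)\ge\|Z_1\|_*$.

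For the second half, apply a Schur complement argument to $Z_\pm\succeq0$ with corner entry $t_\pm={\rm \bf Trace}(Z^1_\pm)$, exactly as in the proof of Lemma \ref{lem_Z_psd_properies}. This gives $Z^1_\pm t_\pm\succeq Z^2_\pm (Z^2_\pm)^T$, and when $t_\pm=0$ it forces $Z^2_\pm=0$. Taking traces yields $t_\pm^2\ge\|Z^2_\pm\|_2^2$, i.e.\ $t_\pm\ge\|Z^2_\pm\|_2$. Summing and using the triangle inequality, ${\rm \bf Trace}(Z^1_+)+{\rm \bf Trace}(Z^1_-)\ge\|Z_2\|_2$.

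Adding the two halves and multiplying by $\beta/2\ge0$ shows the cost of (\ref{eqn_QNNlowerboundrls}) at $(Z_1,Z_2)$ is at most the cost of (\ref{eqn_QNNConvex}). Minimizing over feasible points gives the lower bound.

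\emph{Step 3 (tightness for $\beta=0$).} I expect this step to be the main obstacle. Here the objectives reduce to $\mathcal L$, so it suffices to show that every $(Z_1,Z_2)$ producing given predictions is realizable in (\ref{eqn_QNNConvex}). Form $Z$ as in (\ref{eqn_lem_Z_form}); note that $b^2\ge4ac>0$ gives $b\neq0$. Lemma \ref{lem_Z_decomposition} then gives $Z=Z^+-Z^-$ with $Z^\pm\succeq0$ of the same structured form with blocks $Z_1^\pm,Z_2^\pm$.

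The difficulty is that (\ref{eqn_QNNConvex}) requires PSD-ness of the \emph{unscaled} matrices $\left[\begin{smallmatrix}Z_1^\pm & Z_2^\pm\\ (Z_2^\pm)^T & {\rm \bf Trace}(Z_1^\pm)\end{smallmatrix}\right]$, not of the scaled $Z^\pm$. This is exactly what Lemma \ref{lem_Z_psd_properies} provides under $a>0$, $c>0$, $b^2-4ac\ge0$. Hence $Z^k_\pm$ built from $Z_1^\pm,Z_2^\pm$ is feasible for (\ref{eqn_QNNConvex}) with identical $\hat y_i$, so the optimal values coincide at $\beta=0$.
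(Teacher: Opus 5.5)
Your proposal is correct and follows the paper's proof essentially step for step: the same feasibility map from (\ref{eqn_QNNConvex}) to (\ref{eqn_QNNlowerboundrls}), the same two regularizer bounds (the traces dominate $\|Z_1\|_*$, and the Schur/trace argument gives $\textbf{Tr}(Z_1^\pm)\ge\|Z_2^\pm\|_2$), and the same tightness argument at $\beta=0$ via Lemma \ref{lem_Z_decomposition} followed by Lemma \ref{lem_Z_psd_properies} applied to $Z^+$ and $Z^-$. The only differences are cosmetic: you get the nuclear-norm bound from the triangle inequality rather than the paper's parameterization $Z_1^\pm=\pm V\Lambda_\pm V^T+V\bar KV^T$ with $\textbf{Tr}(\bar K)\ge0$, and you apply the Schur complement directly to the unscaled PSD blocks of (\ref{eqn_QNNConvex}), which shows that the lower-bound half does not need $a>0$, $c>0$, $b^2-4ac\ge0$.
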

 %\vspace{10pt}
%{\bf Proof}:
\begin{proof}
Noting that any feasible point $Z_1,Z_2$ of (\ref{eqn_QNNConvex}) leads to a matrix
\begin{equation*}
Z=\begin{bmatrix} aZ_1 & \frac{b}{2}Z_2 \\ \frac{b}{2}Z_2^T & c\textbf{Tr}\left(Z_1\right) \end{bmatrix},
\end{equation*}
and is also a feasible point of (\ref{eqn_QNNlowerboundrls}), we will compare the value of the two
objective functions for such points.
Any matrix $Z_1=Z_1^T$ corresponding to a feasible point of (\ref{eqn_QNNConvex}) can be decomposed as
\begin{equation}\label{eigendecompositionZ}
Z_1=\sum_{i=1}^n\lambda_iv_iv_i^T,
\end{equation}
where $\lambda_i=\lambda_i\left(Z_1\right)$ are real eigenvalues and the eigenvectors $v_i$ form an orthonormal set for $i=1,\ldots,n$.
The equation (\ref{eigendecompositionZ}) can be rewritten as
\begin{equation}\label{decompositionZ1}
Z_1=\left(\sum_{\lambda_i\ge0}\lambda_iv_iv_i^T\right)-\left(-\sum_{\lambda_i<0}\lambda_iv_iv_i^T\right)=Z_1^+-Z_1^-,~Z_1^+\succeq0, Z_1^-\succeq0.
\end{equation}
The most general decomposition of the form (\ref{decompositionZ1}) is thus
\begin{eqnarray}\label{Z1decomposition}
Z_1^+&=&\sum_{\lambda_i\ge0}\lambda_iv_iv_i^T+K=V\Lambda_+V^T+V\bar KV^T,\nonumber\\
Z_1^-&=&-\sum_{\lambda_i<0}\lambda_iv_iv_i^T+K=-V\Lambda_-V^T+V\bar KV^T,
\end{eqnarray}
where $\bar K=\bar K^T$, $V=[v_1\ldots v_n], V^TV=I$ and, without loss of generality,
\begin{eqnarray*}
\Lambda_+&=&{\rm diag}(\lambda_1^+,\ldots,\lambda_p^+, 0,\ldots,0),\\
~\Lambda_-&=&{\rm diag}(0,\ldots,0,\lambda_1^-,\ldots,\lambda_q^-).
\end{eqnarray*}
Note that the main diagonal of the $\Lambda_+,~\Lambda_-$ matrices have the $p$ non-negative and $q$ negative eigenvalues of $Z_1$, respectively, with $n=p+q$.
Since $Z_1^+\succeq0$ and $Z_1^-\succeq0$, we must have
\begin{eqnarray}\label{constraintsK}
\Lambda_++\bar K &\succeq& 0,\nonumber\\
-\Lambda_-+\bar K &\succeq& 0.
\end{eqnarray}
To satisfy the constraints (\ref{constraintsK}) we conclude that the main diagonal elements of $\bar K$ must be non-negative because of the structures of $\Lambda_+$ and $\Lambda_-$.
Therefore, we must have $\textbf{Tr}(\bar K)\ge0$.
Given that
\begin{eqnarray*}
\textbf{Tr}\left(Z_1^+\right)&=&\textbf{Tr}(\Lambda_+)+ \textbf{Tr}(\bar K),\\
\textbf{Tr}\left(Z_1^-\right)&=& -\textbf{Tr}(\Lambda_-)+ \textbf{Tr}(\bar K),
\end{eqnarray*}
then the regularization term of the objective function of (\ref{eqn_QNNConvex}) is
\begin{equation}\label{nuclearnormbound}
\textbf{Tr}(Z^+_1)+ \textbf{Tr}(Z^-_1)=\sum_{i=1}^n|\lambda_i|+2\textbf{Tr}(\bar K)\ge \|Z_1\|_*
\end{equation}
where $\|Z_1\|_*=\|\lambda\|_1$.
The loss term of the objective function is $\mathcal{L}(\hat{y},y)$ and does not depend on the matrix $\bar K$.

On the other hand, from the Schur complement in the form of the inequality (\ref{schurcondition}) in Lemma \ref{lem_Z_psd_properies}, we find that when $a>0, c>0, b^2-4ac\ge0$, any feasible point of (\ref{eqn_QNNConvex}) must satisfy
\begin{eqnarray*}
Z_1^+\textbf{Tr}(Z_1^+) \succeq Z_2^+\left(Z_2^+\right)^T,\nonumber\\
Z_1^-\textbf{Tr}(Z_1^-) \succeq Z_2^-\left(Z_2^-\right)^T.
\end{eqnarray*}
Applying the trace operator to both sides of the inequality and taking square root, noticing that
$\textbf{Tr}(Z_1^+)\ge0,~\textbf{Tr}(Z_1^-)\ge0$, gives
\begin{eqnarray*}
\textbf{Tr}(Z_1^+) \ge \|Z_2^+\|_2,\nonumber\\
\textbf{Tr}(Z_1^-) \ge \|Z_2^-\|_2.
\end{eqnarray*}
Using the fact that $\|Z_2\|_2=\|Z_2^+-Z_2^-\|_2\le\|Z_2^+\|_2+\|Z_2^-\|_2$, yields
\begin{equation}\label{normtwobound}
\textbf{Tr}\left(Z_1^+\right)+\textbf{Tr}\left(Z_1^-\right)\ge\|Z_2\|_2.
\end{equation}
Therefore, from (\ref{nuclearnormbound}) and (\ref{normtwobound}), the regularization term of the objective function of (\ref{eqn_QNNConvex})  can be lower bounded as follows,
\begin{equation}\label{tobeusedinrls}
\frac{1}{2}2\left[\textbf{Tr}\left(Z_1^+\right)+\textbf{Tr}\left(Z_1^-\right)\right]\ge\frac{1}{2}\left(\|Z_1\|_*+\|Z_2\|_2\right).
\end{equation}
The loss term of the objective function $\mathcal{L}(\hat{y},y)$ is the same for (\ref{eqn_QNNConvex}) and (\ref{eqn_QNNlowerboundrls}).
This proves the lower bound statement because the objective function of the optimization (\ref{eqn_QNNlowerboundrls}) is a lower bound on the objective function of (\ref{eqn_QNNConvex}) for any feasible point of (\ref{eqn_QNNConvex}), which is also a feasible point of (\ref{eqn_QNNlowerboundrls}).

For $\beta=0$ the objective functions of the problems (\ref{eqn_QNNConvex}) and (\ref{eqn_QNNlowerboundrls}) are the same because there is no regularization term.
However, the problem (\ref{eqn_QNNlowerboundrls}) has less constraints and thus its optimal value must be less or equal than the one
of problem (\ref{eqn_QNNConvex}) .
Note however that the satisfaction of the constraints of (\ref{eqn_QNNlowerboundrls}) implies that a feasible point of (\ref{eqn_QNNConvex}) can be found with the same value of the objective function.
To prove this, we use Lemma \ref{lem_Z_decomposition} and then Lemma \ref{lem_Z_psd_properies} twice, the first time with $Z$ replaced by $Z^+$ in (\ref{decompositionofZ}) and the second time with $Z$ replaced by $Z^-$.
Doing this, it follows that for $\beta=0$ any optimal solution of the optimization (\ref{eqn_QNNlowerboundrls}) is also an optimal solution of the optimization (\ref{eqn_QNNConvex}) with the same value of the objective function.
Thus, the lower bound is tight for $\beta=0$.
\done
\end{proof}
%\vspace{10pt}

%The following result will be used in section \ref{leastsquares}.
\begin{theorem}
    \label{lem:QNNnoLMIconst}
For $\beta\ge0$, the optimization problem
    \begin{equation}
        \label{eqn_QNN2LMIconst}
        \begin{split}
        \underset{Z_1, Z_2}{\min}\; & \mathcal{L}(\hat{y},y)+\beta\textbf{Tr}\left(Z_1^++Z_1^-\right) \\
        \text{s.t.}\; & \hat{y}_i = \begin{bmatrix} x_i \\ 1 \end{bmatrix}^T \begin{bmatrix} aZ_1 & \frac{b}{2}Z_2 \\ \frac{b}{2}(Z_2)^T & c\textbf{Tr}(Z_1) \end{bmatrix} \begin{bmatrix} x_i \\ 1 \end{bmatrix}, \quad i=1,\dots,N,\\
       &Z_1=Z_1^+-Z_1^-,~Z_1^+\succeq0,~Z_1^-\succeq0
        \end{split}
    \end{equation}
is equivalent to
    \begin{equation}
        \label{eqn_QNNnoLMIconst}
        \begin{split}
        \underset{Z_1, Z_2}{\min}\; & \mathcal{L}(\hat{y},y)+\beta\|Z_1\|_* \\
        \text{s.t.}\; & \hat{y}_i = \begin{bmatrix} x_i \\ 1 \end{bmatrix}^T \begin{bmatrix} aZ_1 & \frac{b}{2}Z_2 \\ \frac{b}{2}(Z_2)^T & c\textbf{Tr}(Z_1) \end{bmatrix} \begin{bmatrix} x_i \\ 1 \end{bmatrix}, \quad i=1,\dots,N
        \end{split}
    \end{equation}
where $\|Z_1\|_*=\|\lambda\|_1$, and $\lambda=[\lambda_1\ldots\lambda_n]^T$ is the vector of eigenvalues of $Z_1$.
Furthermore, the optimal value of either optimization is a lower bound on the optimal value of (\ref{eqn_QNNConvex}).
This lower bound is tight when $\beta=0,~a>0$, $c>0$, $b^2-4ac \geq 0$.
\end{theorem}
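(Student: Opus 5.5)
The equivalence will follow by minimizing the objective of (\ref{eqn_QNN2LMIconst}) over the splitting variables $Z_1^+,Z_1^-$ while holding $Z_1,Z_2$ fixed. The key observation is that the equality constraints defining $\hat y_i$ depend only on $Z_1$ and $Z_2$, not on how $Z_1$ is split. Hence, for fixed $(Z_1,Z_2)$, the loss term is fixed. Only the term $\beta\textbf{Tr}(Z_1^++Z_1^-)$ varies, over all PSD pairs with $Z_1^+-Z_1^-=Z_1$.

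I will reuse the argument already given in the proof of Theorem \ref{lem:QNNlowerboundrls}. Write $Z_1=V\Lambda V^T$, and parametrize every admissible splitting as in (\ref{Z1decomposition}) by a symmetric $\bar K$ satisfying (\ref{constraintsK}). Then (\ref{nuclearnormbound}) gives
\[
\textbf{Tr}(Z_1^+)+\textbf{Tr}(Z_1^-)=\|Z_1\|_*+2\textbf{Tr}(\bar K)\ge\|Z_1\|_*,
\]
with $\textbf{Tr}(\bar K)\ge 0$. The bound is attained by $\bar K=0$, i.e.\ by the spectral splitting into positive and negative parts, which is admissible. Therefore
\[
\min_{Z_1^\pm}\textbf{Tr}(Z_1^++Z_1^-)=\|Z_1\|_*,
\]
and, since $\beta\ge0$, the partial minimization turns (\ref{eqn_QNN2LMIconst}) into exactly (\ref{eqn_QNNnoLMIconst}). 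The optimal values coincide. Optimal $(Z_1,Z_2)$ of one problem yield optimal points of the other, by taking the spectral splitting in one direction and forgetting the splitting in the other.

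For the lower bound, take any feasible point of (\ref{eqn_QNNConvex}) in the single-output case, with $Z_+,Z_-\succeq0$ and $Z_1=Z_+^{1}-Z_-^{1}$. Its upper-left blocks $Z_1^\pm:=Z^{1}_\pm$ are PSD, since they are principal submatrices of PSD matrices. With $Z_2=Z^{2}_+-Z^{2}_-$, this gives a feasible point of (\ref{eqn_QNN2LMIconst}) with the same $\hat y_i$. Its regularization term is $\beta(Z^{4}_++Z^{4}_-)=\beta\textbf{Tr}(Z_1^++Z_1^-)$, so the objective value is identical. Hence the optimum of (\ref{eqn_QNN2LMIconst}), and therefore of (\ref{eqn_QNNnoLMIconst}), is at most that of (\ref{eqn_QNNConvex}). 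Note that this part needs no sign assumptions on $a,b,c$.

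For tightness at $\beta=0$ under $a>0$, $c>0$, $b^2-4ac\ge0$, problem (\ref{eqn_QNNnoLMIconst}) coincides with (\ref{eqn_QNNlowerboundrls}) at $\beta=0$. Its tightness is then exactly the last assertion of Theorem \ref{lem:QNNlowerboundrls}, which I will invoke. Alternatively, one can argue directly: given an optimal $(Z_1,Z_2)$, apply Lemma \ref{lem_Z_decomposition} to the structured matrix $Z$ to obtain structured PSD $Z^+,Z^-$. After dividing the blocks by $a$, $b/2$, $c$, Lemma \ref{lem_Z_psd_properies} turns these into feasible $Z_\pm$ for (\ref{eqn_QNNConvex}) with the same outputs. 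The main subtlety I expect is this last step. The blocks of (\ref{eqn_QNNConvex}) are $Z^k_\pm$ with corner entry $\textbf{Tr}(Z^{1}_\pm)$, so I need $Z'\succeq0$ rather than $Z\succeq0$, and this is exactly what (\ref{eqn_lem_psd_req}) supplies. The hypothesis $b\neq0$ of Lemma \ref{lem_Z_decomposition} holds automatically, since $b^2\ge4ac>0$.
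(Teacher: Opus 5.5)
Your proposal is correct and follows essentially the paper's route. The equivalence comes from minimizing over the splitting via the $\bar K$ parametrization and (\ref{nuclearnormbound}), with the minimum attained at $\textbf{Tr}(\bar K)=0$; the lower bound comes from mapping feasible points of (\ref{eqn_QNNConvex}) to feasible points of (\ref{eqn_QNN2LMIconst}) with the same objective; and tightness at $\beta=0$ uses Lemmas \ref{lem_Z_decomposition} and \ref{lem_Z_psd_properies} as in Theorem \ref{lem:QNNlowerboundrls}. You also make precise two steps the paper only sketches: the principal-submatrix argument for that mapping, and the observation that Lemma \ref{lem_Z_psd_properies} is needed to pass from the scaled $Z^\pm\succeq0$ to the unscaled $Z'\succeq0$.
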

 %\vspace{10pt}
%{\bf Proof}:
\begin{proof}
The feasible points of (\ref{eqn_QNN2LMIconst}) are also feasible points of (\ref{eqn_QNNnoLMIconst}) .
The objective function of (\ref{eqn_QNNnoLMIconst}) is a lower bound on the objective function of (\ref{eqn_QNN2LMIconst}) for any feasible point of (\ref{eqn_QNN2LMIconst}) because
\begin{equation*}
\|Z_1\|_*=\|Z_1^+-Z_1^-\|_*\le\|Z_1^+\|_*+\|Z_1^-\|_*=\textbf{Tr}(Z_1^+)+\textbf{Tr}(Z_1^-)
\end{equation*}
given that $Z_1^+\succeq0$ and $Z_1^-\succeq0$, which implies that the one-norm of the vector of eigenvalues of both $Z_1^+$ and $Z_1^-$ is the sum of the eigenvalues (i.e., the trace).
The proof that the lower bound is tight follows from the proof of Theorem \ref{lem:QNNlowerboundrls}.
In particular, it follows from the inequality (\ref{nuclearnormbound}) and the fact that $\bar K$ does not appear in the loss term $\mathcal{L}(\hat{y},y)$.
Therefore, the optimal $\bar K$ must yield $\textbf{Tr}(\bar K)=0$.
This value of $\textbf{Tr}(\bar K)$ makes the inequality (\ref{nuclearnormbound}) become an equality, which then implies that (\ref{nuclearnormbound}) is a tight lower bound.

To prove that the optimal value of either optimization is a lower bound on the optimal value of (\ref{eqn_QNNConvex}), note that the objective function of (\ref{eqn_QNN2LMIconst}) is the same as the one of (\ref{eqn_QNNConvex}).
Additionally, the objective function of (\ref{eqn_QNNnoLMIconst}) is a lower bound to the objective function of (\ref{eqn_QNNConvex}).
Moreover, the constraints of (\ref{eqn_QNN2LMIconst}) and (\ref{eqn_QNNnoLMIconst}) are a subset of the constraints of (\ref{eqn_QNNConvex}).
This proves the statement that the optimal value of either optimization is a lower bound on the optimal value of (\ref{eqn_QNNConvex}).
The case $\beta=0$ is proved in a similar way as in Theorem \ref{lem:QNNlowerboundrls}.\done
\end{proof}
 %\vspace{10pt}

\begin{remark}\label{nuclearnormremark}
Note that the optimization (\ref{eqn_QNNConvex}) will lead to $Z_1\to0$ and $Z_2\to0$ as $\beta\to\infty$ because $Z_1$ is penalized in the cost and $Z_2$ is related to $Z_1$ through the LMI constraints.
One of the main features of the optimization (\ref{eqn_QNNnoLMIconst}), when compared to (\ref{eqn_QNNConvex}),  is that the matrix $Z_2$ is not penalized in the objective function and is not related to the matrix $Z_1$ through any inequality constraints. Therefore, the optimization (\ref{eqn_QNNnoLMIconst}) will lead to a linear model for $\hat y_i$ as a function of $x_i$ when $\beta\to\infty$. This happens because $Z_1\to0$ but $Z_2\neq0$ as $\beta\to\infty$ so that the loss $\mathcal{L}(\hat{y},y)$ can be minimized.
\end{remark}
 %\vspace{10pt}
\begin{remark}\label{nuclearremark}
The expression for the output $\hat y_i$ in (\ref{eqn_QNNnoLMIconst}) is linear with respect to the elements of $Z_1$ and $Z_2$ because $x_i, a, b, c$ are known.
Therefore, one can rewrite the optimization as the following nuclear norm regularization,
    \begin{equation}
        \label{eqn_QNNMatrixCompletion}
        \begin{split}
        \underset{Z_1, Z_2}{\min}\; & \mathcal{L}(\hat{y},y)+\beta\|Z_1\|_* \\
        \text{s.t.}\; & \hat{y}_i = \mathcal A_i\left(Z_1,Z_2\right), \quad i=1,\dots,N 
        \end{split}
    \end{equation}
where the linear operator $\mathcal A_i$ is given by
\begin{equation}
\mathcal A_i\left(Z_1,Z_2\right)=
\begin{bmatrix} x_i \\ 1 \end{bmatrix}^T \begin{bmatrix} aZ_1 & \frac{b}{2}Z_2 \\ \frac{b}{2}(Z_2)^T & c\textbf{Tr}(Z_1) \end{bmatrix} \begin{bmatrix} x_i \\ 1 \end{bmatrix}.
\end{equation}
As $\beta$ becomes large enough the term $\beta\|Z_1\|_*$ dominates, and  the optimization problem becomes related to the primal problem of nuclear norm minimization.
\end{remark}

\section{Least Squares Training of QNNs}\label{leastsquares}
This section is divided into two subsections: training without regularization and training with regularization.

\subsection{Training Without Regularization}\label{trainingwithoutregularization}
Observing that the input-output expression in (\ref{eqn_QNNlowerboundrls}) relating $x_i$ to $\hat y_i$ is linear with respect to the elements of $Z_1$ and $Z_2$, the problem for $\beta=0$ can be rewritten as
\begin{equation}
    \label{eqn_qnn_ls_form}
    \begin{split}
        \underset{\theta}{\text{min}}\;& \mathcal{L}(\hat{y},y)\\
        \text{s.t.}\;&\hat{y} = H\theta
    \end{split}
\end{equation}
where the regressor matrix is given by
\begin{equation}
    \label{eqn_qnn_H_regressor}
    \begin{gathered}
        H = \begin{bmatrix}
            aH_1+cH_2 & bX
        \end{bmatrix}, \\
        H_1 = \begin{bmatrix}
            \textbf{vec}(x_1x_1^T) \\ \vdots \\ \textbf{vec}(x_Nx_N^T)
        \end{bmatrix}, \quad H_2 = \begin{bmatrix}
            U_{N,n} & O_{N,0.5n(n+1)-n}
        \end{bmatrix}, \quad X=\begin{bmatrix}
            x_1^T \\ \vdots \\ x_N^T
        \end{bmatrix}
    \end{gathered}
\end{equation}
with $\textbf{vec}(x_ix_i^T)$ being a row vector containing all of the upper triangular elements of $x_ix_i^T$, i.e.,
\begin{equation}
    \begin{gathered}
        \textbf{vec}(x_ix_i^T) = \begin{bmatrix} diag_1 & diag_2 & \cdots & diag_n \end{bmatrix}\in\mathbb{R}^{1 \times 0.5n(n+1)} \\
        diag_q = \begin{bmatrix} x_{i_{\{1\}}}x_{i_{\{q\}}} & x_{i_{\{2\}}}x_{i_{\{q+1\}}} & \cdots & x_{i_{\{n-q+1\}}}x_{i_{\{n\}}} \end{bmatrix}
    \end{gathered}
\end{equation}
and the weight vector is
\begin{equation}\label{thetavector}
    \begin{gathered}
        \theta = \begin{bmatrix}\theta_{1,1} & 2\theta_{1,2} & \cdots & 2\theta_{1,n} & \theta_2 \end{bmatrix}^T \in\mathbb{R}^{0.5n(n+1)+n} \\
        \theta_{1,i}= \begin{bmatrix}Z_{1_{\{1,i\}}} & Z_{1_{\{2,i+1\}}} & \cdots & Z_{1_{\{n-i+1,n\}}}\end{bmatrix}, \quad \theta_{2}= \begin{bmatrix}Z_{2_{\{1\}}} & Z_{2_{\{2\}}} & \cdots & Z_{2_{\{n\}}}\end{bmatrix}
    \end{gathered}
\end{equation}
When $\mathcal{L}(\hat{y},y)=||\hat{y}-y||_2$, problem (\ref{eqn_qnn_ls_form}) has the optimal solution
\begin{equation}\label{leassquaresqnn}
    \theta^* = (H^TH)^{-1}H^Ty
\end{equation}
and the network output is given by the linear model
\begin{equation}\label{networkoutput}
\hat y =H\theta^*.
\end{equation}
The desired outputs (labels) can be written as
\begin{equation}\label{ytheta}
y=H\theta+e_y,
\end{equation}
where $e_y$ is the error relative to the linear model $H\theta$.
If we define the output error as
\begin{equation}\label{outputerror}
\delta y=y-\hat y=H\delta\theta+e_y,
\end{equation}
where
\begin{equation}\label{errortheta}
\delta\theta=\theta-\theta^*=\theta-(H^TH)^{-1}H^Ty,
\end{equation}
then replacing (\ref{ytheta}) into (\ref{errortheta}) yields
\begin{equation}
\delta\theta=-(H^TH)^{-1}H^Te_y.
\end{equation}
This expression relates the error $e_y$ to the perturbation $\delta\theta$ in the weights
and it is called the sensitivity of the least squares solution for the weights.
The fact that this sensitivity has an analytical expression is one of the advantages of least squares training of the weights.

\subsection{Training With Regularization}\label{trainingwithregularization}
Although it is not possible to find the optimal solution using least squares when $\beta\neq0$, one can find a lower bound on the optimal value of the objective function using regularized least squares, as shown next.
 %\vspace{10pt}
\begin{theorem}\label{upperlowerbounds}
For $H$ given by (\ref{eqn_qnn_H_regressor}) and
\begin{equation}\label{Mgamma}
	M_\gamma=
       \begin{bmatrix}
       I_{n\times n} & 0_{n\times\frac{n}{2}(n-1)} & 0_{n\times n}\\
       0_{\frac{n}{2}(n-1)\times n} & \frac{1}{\sqrt{2}}I_{\frac{n}{2}(n-1)\times\frac{n}{2}(n-1)} & 
0_{\frac{n}{2}(n-1)\times n}\\
       0_{n\times n} & 0_{n\times\frac{n}{2}(n-1)} & \gamma I_{n\times n}
      \end{bmatrix},
\end{equation}
the solution of
\begin{equation}\label{eqn_qnn_ls_lowerbound}
\begin{array}{l}
        \min~\|y-\hat y\|_2^2+k\beta^2\theta^TM_\gamma^TM_\gamma\theta\\
        \mbox{s.t}~~~\hat{y} = H\theta
\end{array}
\end{equation}
for $k\ge0,~\gamma\ge 0$, is given by
\begin{equation}
    \label{eqn_reg_ls_lowerbound}
    \theta_*^{k,\gamma} = (H^TH+k\beta^2 M_\gamma^TM_\gamma)^{-1}H^Ty.
\end{equation}
A lower bound on the optimal value of (\ref{eqn_QNNnoLMIconst}) when $\mathcal L(\hat y,y)=\|y-\hat y\|_2$ is obtained from (\ref{eqn_reg_ls_lowerbound}) for $k=1, \gamma=0$, as
\begin{equation}\label{lowerbound}
l_b=\sqrt{\left\|y-H\theta_*^{1,0}\right\|_2^2+\beta^2\left(\theta_*^{1,0}\right)^TM_0^TM_0\theta_*^{1,0}}
\end{equation}
An upper bound on the optimal value of (\ref{eqn_QNNnoLMIconst}) when $\mathcal L(\hat y,y)=\|y-\hat y\|_2$ is obtained from (\ref{eqn_reg_ls_lowerbound}) for $k=r={\rm rank}\left(Z_1\right),~\gamma=0$, as
\begin{equation}\label{upperbound}
u_b=\sqrt{2\left\|y-H\theta_*^{r,0}\right\|_2^2+2r\beta^2\left(\theta_*^{r,0}\right)^TM_0^TM_0\theta_*^{r,0}}
\end{equation}
where $Z_1$ is the symmetric matrix whose main diagonal is $\theta_{1,1}$ and the upper diagonals are $\theta_{1,2},\ldots,\theta_{1,n}$ as defined in (\ref{thetavector}), after replacing $\theta$ by $\theta_*^{r,0}$.
Finally, a lower bound on the optimal value of (\ref{eqn_QNNConvex}) when $a>0, c>0, b^2-4ac\ge0$, and $\mathcal L(\hat y,y)=\|y-\hat y\|_2$ is obtained from (\ref{eqn_reg_ls_lowerbound}) for $k=\frac{1}{4},~\gamma=1$, as
\begin{equation}\label{lowerdualbound2}
\bar l_b=\sqrt{\left\|y-H\theta_*^{0.25,1}\right\|_2^2+\frac{\beta^2}{4}\left(\theta_*^{0.25,1}\right)^TM_1^TM_1\theta_*^{0.25,1}}
\end{equation}
\end{theorem}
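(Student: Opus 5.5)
The plan is to obtain the closed form (\ref{eqn_reg_ls_lowerbound}) from the normal equations, and then to sandwich the square of each objective between regularized least-squares costs that use the Frobenius norm.

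\textbf{Closed form.} Problem (\ref{eqn_qnn_ls_lowerbound}) is an unconstrained convex quadratic in $\theta$ once we substitute $\hat y=H\theta$. Setting the gradient to zero gives
\begin{equation*}
(H^TH+k\beta^2M_\gamma^TM_\gamma)\theta=H^Ty,
\end{equation*}
which yields (\ref{eqn_reg_ls_lowerbound}). This needs the matrix to be invertible, as already assumed in (\ref{leassquaresqnn}) for $H^TH$. For $k\beta^2>0$ and $\gamma>0$ the added term is positive definite, which helps.

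\textbf{Interpreting $M_\gamma$.} The key identity is
\begin{equation*}
\theta^TM_\gamma^TM_\gamma\theta=\|Z_1\|_F^2+\gamma^2\|Z_2\|_2^2 .
\end{equation*}
To check it, use the ordering of $\theta$ in (\ref{thetavector}). The first $n$ entries are the diagonal of $Z_1$. The next $\frac{n}{2}(n-1)$ entries are $2Z_{1_{\{i,j\}}}$ for $i<j$; scaled by $1/\sqrt2$ and squared, each gives $2Z_{1_{\{i,j\}}}^2$, which accounts for both symmetric off-diagonal entries. The last $n$ entries are $Z_2$. Combining this with the standard inequalities $\|Z_1\|_F\le\|Z_1\|_*\le\sqrt{r}\,\|Z_1\|_F$, where $r={\rm rank}(Z_1)$, reduces every bound to elementary squaring arguments.

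\textbf{Lower bound $l_b$.} Let $J(\theta)=\|y-H\theta\|_2+\beta\|Z_1\|_*$ be the objective of (\ref{eqn_QNNnoLMIconst}). The constraints there are exactly $\hat y=H\theta$, so the feasible set is all of $\theta$-space. Using $(u+v)^2\ge u^2+v^2$ for $u,v\ge0$,
\begin{equation*}
J(\theta)^2\ge\|y-H\theta\|_2^2+\beta^2\|Z_1\|_*^2\ge\|y-H\theta\|_2^2+\beta^2\theta^TM_0^TM_0\theta .
\end{equation*}
The right-hand side is at least its minimum over $\theta$, attained at $\theta_*^{1,0}$, with value $l_b^2$. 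Taking square roots at the optimum gives the claim.

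\textbf{Upper bound $u_b$.} Evaluate $J$ at the feasible point $\theta_*^{r,0}$. Using $(u+v)^2\le2u^2+2v^2$ and $\|Z_1\|_*^2\le r\|Z_1\|_F^2$,
\begin{equation*}
J(\theta_*^{r,0})^2\le 2\|y-H\theta_*^{r,0}\|_2^2+2r\beta^2(\theta_*^{r,0})^TM_0^TM_0\theta_*^{r,0}=u_b^2 .
\end{equation*}
The optimal value is at most $J(\theta_*^{r,0})$.

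\textbf{Lower bound $\bar l_b$.} By Theorem \ref{lem:QNNlowerboundrls}, the optimal value of (\ref{eqn_QNNConvex}) is at least the optimal value of (\ref{eqn_QNNlowerboundrls}), whose objective is
\begin{equation*}
\|e\|_2+\tfrac{\beta}{2}\left(\|Z_1\|_*+\|Z_2\|_2\right),\qquad e=y-H\theta .
\end{equation*}
Squaring and dropping cross terms (all quantities are nonnegative) gives
\begin{equation*}
\left(\|e\|_2+\tfrac{\beta}{2}(\|Z_1\|_*+\|Z_2\|_2)\right)^2\ge\|e\|_2^2+\tfrac{\beta^2}{4}\left(\|Z_1\|_F^2+\|Z_2\|_2^2\right)=\|e\|_2^2+\tfrac{\beta^2}{4}\theta^TM_1^TM_1\theta .
\end{equation*}
Here we use $(\|Z_1\|_*+\|Z_2\|_2)^2\ge\|Z_1\|_*^2+\|Z_2\|_2^2$ and $\|Z_1\|_*\ge\|Z_1\|_F$. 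Minimizing the right-hand side over $\theta$ gives the value at $\theta_*^{0.25,1}$, which is $\bar l_b^2$.

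\textbf{Main obstacle.} The delicate step is the upper bound, because $r$ is defined self-referentially as the rank of $Z_1$ extracted from $\theta_*^{r,0}$ itself. I would handle this by noting that the argument only uses $\|Z_1\|_*\le\sqrt{r}\|Z_1\|_F$ at the single evaluated point. Hence any $r$ that dominates the rank there works, for instance $r=n$, which always does. The stated $r$ is then a consistent choice among these.
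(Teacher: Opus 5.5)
Your proposal is correct and takes essentially the paper's route: the same sandwich $\|Z_1\|_F\le\|Z_1\|_*\le\sqrt{r}\,\|Z_1\|_F$ combined with $\theta^TM_\gamma^TM_\gamma\theta=\|Z_1\|_F^2+\gamma^2\|Z_2\|_2^2$, the same comparisons of $(u+v)^2$ with $u^2+v^2$ and with $2(u^2+v^2)$, and the same appeal to Theorem \ref{lem:QNNlowerboundrls} for $\bar l_b$. The one difference, in your favor, is in $u_b$: you bound the objective at the single feasible point $\theta_*^{r,0}$, whereas the paper speaks of minimizing an upper-bounding function with $r$ held fixed over all $\theta$, a bound that is only valid where ${\rm rank}(Z_1)\le r$; your framing makes the self-referential $r$ harmless and shows that any $r$ dominating that rank (e.g.\ $r=n$) also works.
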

 %\vspace{10pt}
%{\bf Proof:}
\begin{proof}
The following bounds are valid for the 1-norm,
\begin{equation}\label{normboundslambda}
\|Z_1\|_F=\|\lambda\left(Z_1\right)\|_2\le\|\lambda\left(Z_1\right)\|_1\le\sqrt{r}\|\lambda\left(Z_1\right)\|_2=\sqrt{r}\|Z_1\|_F,
\end{equation}
where the Frobenius norm is
\begin{equation}
\|Z_1\|_F=\sqrt{\sum_{i,j=1}^nZ_{1_{\{i,j\}}}^2}=\sqrt{\theta^TM_0^TM_0\theta}
\end{equation}
for $\theta$ defined in (\ref{thetavector}) and $M_0$ defined in (\ref{Mgamma}),
and where $r={\rm rank}\left(Z_1\right)$.
From (\ref{normboundslambda}) and $\hat y=H\theta$, 
a lower bound on the objective function  of problem (\ref{eqn_QNNnoLMIconst}) when $\mathcal L(\hat y,y)=\|y-\hat y\|_2$ is thus
\begin{equation}\label{glow}
g_{{\rm low}}(y,\beta,\theta)=\sqrt{(y-H\theta)^T(y-H\theta)}+\beta\sqrt{\theta^TM_0^TM_0\theta},
\end{equation}
which in turn is lower bounded by (\ref{lowerbound}).
Minimizing (\ref{lowerbound}) is equivalent to minimizing its square and this
leads to the optimization (\ref{eqn_qnn_ls_lowerbound}) for $k=1,~\gamma=0$.
Using a similar reasoning, one can determine the upper bound
\begin{equation}\
g_{{\rm up}}(y,\beta,\theta)=\sqrt{(y-H\theta)^T(y-H\theta)}+\beta\sqrt{r\theta^TM_0^TM_0\theta},
\end{equation}
which in turn is upper bounded by (\ref{upperbound})
using $\|w\|_1\le\sqrt{2}\|w\|_2$, where
\begin{equation*}
w=\begin{bmatrix}
\sqrt{(y-H\theta)^T(y-H\theta)}\\
\sqrt{r\beta^2\theta^TM_0^TM_0\theta}
\end{bmatrix}.
\end{equation*}
The result then follows by recognizing that minimizing (\ref{upperbound}) is equivalent to minimizing half of its square, and this leads to the optimization (\ref{eqn_qnn_ls_lowerbound}) for $k=r,~\gamma=0$.
For the last result, using the inequalities (\ref{tobeusedinrls}) and (\ref{normboundslambda}),
\begin{equation}\label{rls}
\frac{\beta}{2}2\textbf{Tr}\left(Z_1^++Z_1^-\right)\ge\frac{\beta}{2}\left(\|Z_1\|_*+\|Z_2\|_2\right)\ge\sqrt{\frac{\beta^2}{4}\theta^TM_1^TM_1\theta},
\end{equation}
for $\theta$ defined in (\ref{thetavector}).
Thus, replacing the second term of the objective function of (\ref{eqn_QNNlowerboundrls}) by the rightmost term of the inequalities (\ref{rls}),
and writing the constraints of the problem in the form $\hat y=H\theta$, leads to
a lower bound on the optimal value of (\ref{eqn_QNNConvex}) using (\ref{tobeusedinrls}).
This lower bound is in turn lower bounded by (\ref{lowerdualbound2}).
The proof is complete by noticing that minimizing (\ref{lowerdualbound2}) is equivalent to minimizing its square, which leads to the optimization (\ref{eqn_qnn_ls_lowerbound}) for $k=0.25,~\gamma=1$.\done
\end{proof}
%\done
 %\vspace{10pt}
\begin{remark}
The regularization term using $M_0^TM_0$ as the weighting matrix does not penalize the entries of $Z_2$ in the vector $\theta$. Therefore, the same comment of Remark \ref{nuclearnormremark} applies to the solution in this case.
\end{remark}
\begin{remark}
Since $I\succeq M_1^TM_1\succeq M_0^TM_0$, note that a less tight upper bound on the optimal value of (\ref{eqn_QNNnoLMIconst}) is obtained using the standard regularized least squares solution with $\bar\beta=r\beta^2$, i.e.,
\begin{equation}
    \label{eqn_reg_ls_qnn_solution}
    \theta_*^{r,0} = (H^TH+\bar\beta I)^{-1}H^Ty.
\end{equation}
Note however that the entries of $Z_2$ will be penalized in the regularization term of the objective function and thus $Z_2\to0$ when $\beta\to\infty$.
From Theorem \ref{lem:QNNnoLMIconst} and the solution of (\ref{eqn_qnn_ls_form}) we know that for $\beta=0,~a>0$, $c>0$, $b^2-4ac \geq 0$ the expression (\ref{eqn_reg_ls_qnn_solution}) yields the optimal solution.
Therefore, using small values of $\beta$ one can obtain an approximation of the optimal solution of the problem (\ref{eqn_QNNConvex}). It will be shown in the system identification example that  the solution (\ref{eqn_reg_ls_qnn_solution}) when $\bar\beta$ is replaced by $\beta$ also yields an approximation of the optimal solution, although it is neither a provable upper bound nor a lower bound.
%while preventing invertibility issues that may exist when using the approximation (\ref{eqn_reg_ls_lowerbound}) with $k=r$ when $H$ is not full rank, given that the matrix $M_0^TM_0$ is not invertible, whereas the matrix $I$ is invertible.
\end{remark}
%\vspace{10pt}
%\begin{example}
\section{System Identification Example}\label{sysidexample}
The objective of this example is to perform system identification of a flexible robot arm.
The data is available at the website of the Database for the Identification of Systems (DaISy) \cite{robotarmdata}.
The input $u$ is the measured reaction torque of the arm structure on the ground and the output $z$ is the acceleration of the flexible arm.
The input was a periodic sine sweep. An output of $N=1024$ data points was collected.
The training of the QNN was performed for a quadratic activation function
with parameters $a=0.0937, b=0.5, c=0.4688, \beta=0$, and a quadratic norm loss function $\mathcal L$.
Only the first $122$ data points ($12\%$ of all data) were used to train the network to follow an autoregressive model of the form
\begin{equation}
z_{i+1}=f(z_i,u_i)
\end{equation}
where the nonlinear function $f(z_i,u_i)$ is approximated by a quadratic neural network.
This autoregressive model was proposed in reference \cite{RodriguesGivigi2023}.

\subsection{Training Without Regularization}
\begin{figure}[t] 
\centerline{ \resizebox{91mm}{!}{\includegraphics{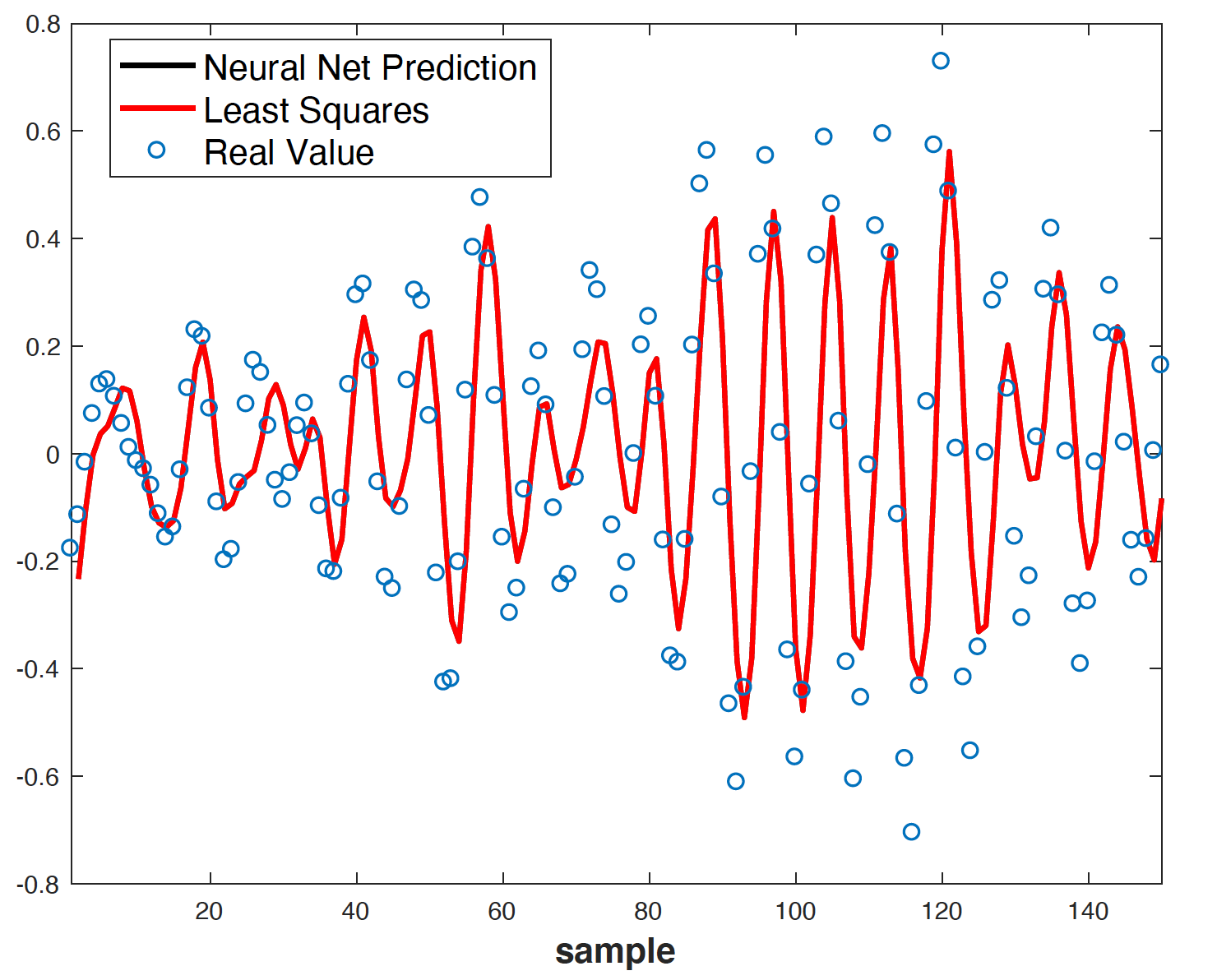}}}
\caption{QNN convex training (black), least square solution (red), and data points (blue circles).
The black and red curves coincide.}
\label{flexiblearm}
\end{figure}
Using the least squares formula (\ref{leassquaresqnn}), the resulting model yields $z_{i+1}=\hat y_i$ where
\begin{eqnarray}\label{eqexample2}
\hat y_i=
\begin{bmatrix}
u_i\\
z_i\\
1
\end{bmatrix}^T
\begin{bmatrix}
aZ_{11} & aZ_{12} & 0.5bZ_{13}\\
aZ_{12} & aZ_{22} & 0.5bZ_{23}\\
0.5bZ_{13} & 0.5bZ_{23} & cZ_{11}+cZ_{22}
\end{bmatrix}
\begin{bmatrix}
u_i\\
z_i\\
1
\end{bmatrix}
\end{eqnarray}
\iffalse
which can be rewritten as
\begin{eqnarray*}
\hat y_i=
H_i
\begin{bmatrix}
Z_{11}\\
Z_{22}\\
2Z_{12}\\
Z_{13}\\
Z_{23}
\end{bmatrix}
\end{eqnarray*}
where $H_i$ is row $i$ of regressor matrix $H$ and is written as
\begin{eqnarray*}
H_i=
\begin{bmatrix}
au_i^2+c & az_i^2+c & au_iz_i & bu_i & bz_i 
\end{bmatrix}
\end{eqnarray*}
The least squares solution (\ref{leassquaresqnn}) yields
\fi
and
\begin{eqnarray*}
Z_{11}=-1.9051\\
Z_{12}=1.4030\\
Z_{13}=-0.8069\\
Z_{22}=1.8830\\
Z_{23}=1.3493
\end{eqnarray*}
Fig. \ref{flexiblearm} compares the data with the predictions of the QNN trained both by solving the convex program (\ref{eqn_QNNConvex}) with $\beta=0$ and a $2$-norm loss function, and by using the least squares formula (\ref{leassquaresqnn}).
As can be seen in the figure, the least squares solution is the same as the solution of the convex program (\ref{eqn_QNNConvex}) but it has the advantage of providing a closed-form expression for the weights. 
%The value of the loss function is $l_{ls}=2.7$.

\subsection{Training with Regularization}
Figure \ref{qnntraining} shows the optimal objective function of problem (\ref{eqn_QNNConvex}) as a function of $\beta$.
Moreover, the lower bound (\ref{lowerdualbound2}) and the regularized least squares solution (\ref{eqn_reg_ls_qnn_solution}) are also computed with $\bar\beta$ replaced by $\beta$.
Although for small values of $\beta$ the regularized least squares solution (\ref{eqn_reg_ls_qnn_solution}) offers a tighter approximation to the optimal solution than the lower bound (\ref{lowerdualbound2}), it is not always a lower bound.
In particular, the optimal value of the objective function is lower than the value for the solution (\ref{eqn_reg_ls_qnn_solution}) when $\beta=10^{-4}$.
By contrast, the solution (\ref{lowerdualbound2}) yields a lower bound for all values of $\beta$.
It is also a tighter approximation than the regularized least squares for $\beta\ge5$.
This suggests that for smaller values of $\beta$ one can use the solution (\ref{eqn_reg_ls_qnn_solution}) for the weights, whereas (\ref{leassquaresqnn}) can be used for larger values of $\beta$ to approximate the optimal solution.
%\end{example}

\section{Conclusions}
This paper proposed a regularized least squares method to train quadratic neural networks (QNNs).
The proposed methodology yields a lower bound on the solution of the training optimization problem for the case where the regularization coefficient is positive.
Moreover, it yields closed-form expressions for the exact optimal solution and its sensitivity when the regularization coefficient is zero.
The proposed approach has three main contributions, namely, (i) it yields an analytical expression for the weights, (ii) an analytical expression for the sensitivity of the weights to errors in the data is also provided, (iii) it establishes a connection between the optimization to compute a lower bound and nuclear norm minimization as per Remark \ref{nuclearremark}.
It was recently shown in our previous work that the training of a deep QNN is equivalent to the training of a two-layer QNN with monomial lifting of the input.
Therefore, to the best of our knowledge, this work provides the first least squares training approach for deep neural networks that does not need to fix or randomize the weights in the hidden layers and can treat them as decision variables.
The proposed least squares training was successfully applied to a nonlinear system identification example where the proposed lower bound is compared with the optimal value.

\section{Acknowledgements}
The authors would like to thank the Natural Sciences and Engineering Research Council (NSERC) of Canada for funding this research.

\begin{figure}[t] 
\centerline{ \resizebox{91mm}{!}{\includegraphics{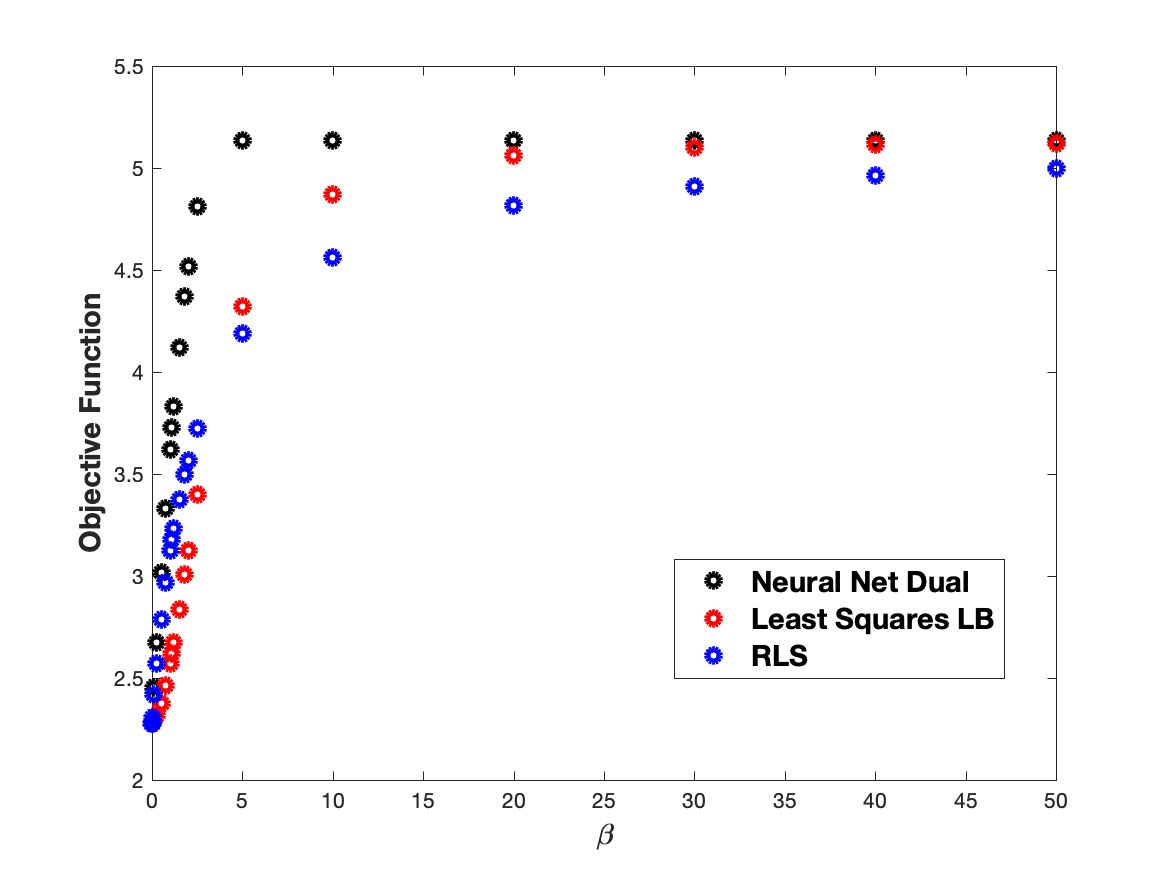}}}
\caption{Optimal values for the training solutions of (\ref{eqn_QNNConvex}), lower bound (\ref{lowerdualbound2}), and regularized least squares (\ref{eqn_reg_ls_qnn_solution}) with $\bar\beta=\beta$ (RLS).}
\label{qnntraining}
\end{figure}
\bibliographystyle{IEEEtran}
\bibliography{nnbibliography}
\end{document}